\newcommand{\subalign}[1]{%
  \vcenter{%
    \Let@ \restore@math@cr \default@tag
    \baselineskip\fontdimen10 \scriptfont\tw@
    \advance\baselineskip\fontdimen12 \scriptfont\tw@
    \lineskip\thr@@\fontdimen8 \scriptfont\thr@@
    \lineskiplimit\lineskip
    \ialign{\hfil$\m@th\scriptstyle##$&$\m@th\scriptstyle{}##$\hfil\crcr
      #1\crcr
    }%
  }%
}
\newtheorem{proposition}{Proposition}
\DeclareMathOperator{\argmin}{argmin}
\DeclareMathOperator{\diag}{diag}
\newcommand{\justify}[1]{{#1\parfillskip 0pt\par}}
\newcommand{\newtilde}{\mbox{\raise.17ex\hbox{$\scriptstyle\sim$}}}
\renewcommand{\paragraph}[1]{\textbf{#1}\hspace{10pt}}
\begin{document}

\twocolumn[
    \aistatstitle{Adaptive Experiment Design with Synthetic Controls}
    \aistatsauthor{Alihan H\"uy\"uk \And Zhaozhi Qian \And Mihaela van der Schaar}
    \aistatsaddress{University of Cambridge \And University of Cambridge \And University of Cambridge}]

\begin{abstract}
    \vspace{-9pt}
    Clinical trials are typically run in order to understand the effects of a new treatment on a given population of patients. However, patients in large populations rarely respond the same way to the same treatment. This heterogeneity in patient responses necessitates trials that investigate effects on \textit{multiple subpopulations}---especially when a treatment has marginal or no benefit for the overall population but might have significant benefit for a particular subpopulation. Motivated by this need, we propose \textsc{Syntax}, an exploratory trial design that identifies subpopulations with positive treatment effect among many subpopulations. \textsc{Syntax} is sample efficient as it (i) recruits and allocates patients \textit{adaptively} and (ii) estimates treatment effects by forming \textit{synthetic controls} for each subpopulation that combines control samples from other subpopulations. We validate the performance of \textsc{Syntax} and provide insights into when it might have an advantage over conventional trial designs through experiments.
\end{abstract}

\vspace{-12pt}
\section{INTRODUCTION}
\vspace{-3pt}

\begin{figure}
    \vspace{-3pt}%
    \includegraphics[width=\linewidth]{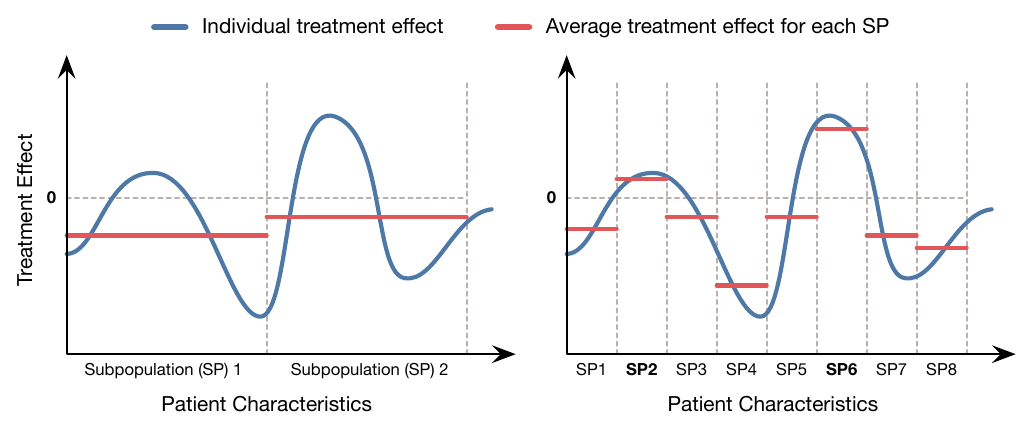}%
    \vspace{2.4pt}%

    \small%
    \resizebox{\linewidth}{!}{
        \begin{tabular}{@{}p{150pt}@{\hspace{12pt}}p{150pt}@{}}
            \toprule
            \hfill \textbf{Clinical Trial A} & \textbf{Clinical Trial B} \\
            \midrule
            \textit{SPs w/ positive effect:} \hfill None & SP2 \& SP6 \\
            \textit{\# of patients per SP:} \hfill 50 samples & \newtilde 12 samples \hfill \textit{(out of 100 samples)} \\
            \bottomrule
        \end{tabular}}%
    \vspace{-1.2pt}%
    \caption{\textbf{Tradeoff between individual benefit and cost.} Consider two clinical trials that are both designed to confirm the effectiveness of a new treatment for one or multiple subpopulations. While Trial~A investigates only two candidate subpopulations, Trial~B investigates eight. As a result, Trial~B has the potential to succeed for two subpopulations (SP2 \& SP6) while Trial~A is likely to fail for all. However, Trial~B needs to allocate fewer samples to each subpopulation, which makes confirming positive effects more challenging. We propose \textsc{Syntax} as an exploratory pilot study that finds good subpopulations to target (such as SP2 \& SP6) ahead of a confirmatory trial.}%
    \label{fig:tradeoff}%
    \vspace{-\baselineskip}%
\end{figure}

\justify{Randomized controlled trials (RCTs) remain an essential part of evidence-based medicine \citep{sackett1995need} despite their recognized shortcomings \citep{feinstein1997problems}. Many of these shortcomings are starting to be addressed by machine learning \citep{curth2023adaptively,huyuk2023when}; in this paper, we focus on a specific one: RCTs typically only consider \textit{average} treatment effects across a given population, yet patients in large populations rarely respond the same way to receiving the same treatment. Differences in genetics, environments, and clinical backgrounds among patients lead to differences in the benefit they receive from a treatment as well. This heterogeneity in patient responses necessitates clinical trial designs that investigate the treatment effect not only for the overall population but also for various \textit{subpopulations} within it \citep{chiu2018design}. Investigating multiple subpopulations at once becomes especially important when a treatment only has marginal benefit for the population as a whole while it might have significant benefit for a particular subpopulation \citep{moineddin2008identifying,lipkovich2017tutorial}. Declaring such a treatment to be ineffective after a clinical trial that ignores heterogeneity might result in the treatment being needlessly denied to the subpopulation that would have actually benefited from it.}

Running more ``complex'' trials that investigate larger numbers of subpopulations provides an opportunity to identify more personalized treatments for each subpopulation, rather than identifying one-size-fits-all treatments that are supposed to be beneficial for everyone \citep{wang2007statistics,lazar2016identifying}. However, such trials also come with increased costs as dividing the population into finer subpopulations naturally requires more patients to be recruited into the trial when the recruitment is done in a randomized manner \citep{whitley2004subgroup,ondra2016methods}. When determining how many subpopulations a trial should investigate, there is a clear tradeoff between the benefit provided to the individual and the cost of running the trial 
(Figure~\ref{fig:tradeoff}).

Motivated by this issue, we propose \textsc{Syntax} (\textit{Synthetic Adaptive eXploration}), an exploratory trial design for subpopulation selection.
Exploratory (Phase I \& II) trials are commonly run to establish a treatment's safety and to determine the correct dosage for the treatment. In contrast, confirmatory (Phase III) trials aim to validate the efficacy of a treatment with type I error control. \textsc{Syntax}, as an exploratory (pre-Phase III, similar to Phase I \& II) trial design, aims to find suitable populations for which validating efficacy would be easier in a subsequent confirmatory trial. Specifically, our objective is to identify subpopulations with positive treatment effect among a pre-determined set of candidate subpopulations with a limited budget of samples---that is a limited number of patients that can be recruited into the trial. While a conventional Phase II trial determines the ``right'' dosage for a Phase III trial, in a similar vein, \textsc{Syntax} determines the ``right'' populations to target in a Phase III trial.

\justify{Our trial design has two key characteristics that contribute to its efficiency in identifying subpopulations:}
\begin{enumerate}[label=(\roman*),nosep,parsep=\parskip,leftmargin=18pt]
    \item \textsc{Syntax} recruits patients and allocates them into control or treatment groups \textit{adaptively}, not randomly. This allows more samples to be allocated into subpopulations that have higher variability in their responses to the treatment.
    
    \item When estimating treatment effects, \textsc{Syntax} forms \textit{synthetic controls} for each subpopulation, by decomposing them into a linear combination of others, instead of relying on control samples just from the subpopulation itself. This allows information from control samples to be shared between different subpopulations and more samples to be allocated to the treatment group.
\end{enumerate}

An important difference bears emphasis here: Conventionally, synthetic controls replace the control group in a trial outright, they are sourced from \textit{offline datasets} collected ahead of the trial, and no new control samples are observed during the trial itself. In this work, we consider synthetic control in an \textit{online context} and investigate how they can still facilitate more efficient use of the control samples that are being collected as a trial continues. This setting is unique within the synthetic control literature, as it requires an experimenter to decide continually for each subpopulation, whether to (i) collect new real data as controls, or (ii)~rely on \mbox{synthetic} controls constructed from previously observed data for other subpopulations.

\paragraph{Contributions}
Our contributions are three-fold: \textit{First,} we outline a treatment effect estimator based on decomposing subpopulations into linear combinations of others as in synthetic control. Within this estimator, the unique decision between collecting new real data vs.\ relying on synthetic controls is reflected by the fact that we allow decompositions of a given subpopulation to include the subpopulation itself.

\justify{\textit{Second,} we provide an upper bound on the variance of our proposed estimator (Proposition~\ref{prop:main}), which can be tighter than the variance of a naive estimator based on conventional controls (Proposition~\ref{prop:corollary}). This upper bound also establishes the rate at which deviations from proper control samples---in favor of synthetic controls---contribute to the error in estimating treatment effects. Identifying this rate leads to an important insight: During an adaptive trial, constructing synthetic controls over collecting real control samples is the most effective when (latent) factors that lead to heterogeneity in patient responses have a stronger effect during the pre-treatment period than the post-treatment period.}

\textit{Finally,} making use of our upper bound, we propose a novel algorithm called \textsc{Syntax} that recruits patients from candidate subpopulations and assigns them into control or treatment groups adaptively to identify subpopulations with positive treatment effect in a sample-efficient manner. Although we motivate and formulate \textsc{Syntax} from the perspective of clinical trials, it remains generally applicable to any thresholding bandit problem with time-series contexts. Through numerical experiments, we validate the performance of our algorithm and provide insight into when it might have an advantage over conventional trial designs.

\vspace{-3pt}
\section{PROBLEM FORMULATION}
\label{sec:formulation}
\vspace{-3pt}

We are given a pre-specified set of patient subpopulations $[K]=\{1,2,\ldots,K\}$. These subpopulations could have been specified according to any arbitrary criteria (for instance, according to biomarkers or geographical location). Each subpopulation~$i\in[K]$ has known/observable \textit{features} $\bm{x}_i\in\mathbb{R}^{D_x}$ and unknown/unobservable \textit{factor loadings} $\bm{z}_i\in\mathbb{R}^{D_z}$. As an example, the overall cardiovascular health of a subpopulation could be a latent factor that effects heart-related features such as ejection fraction, heart rate, and blood pressure. We let $X=[\bm{x}_1\cdots\bm{x}_K]$ and $Z=[\bm{z}_1\cdots\bm{z}_K]$.
Patients from population~$i$, when they receive no treatment, exhibit the \textit{baseline response} with mean
\begin{align}
    \bar{y}_{it} = \delta_t + \bm{w}_t^{\mathsf{T}}\bm{x}_i + \bm{\mu}_t^{\mathsf{T}}\bm{z}_i \quad\text{for}\quad t\in[T] \label{eqn:synthetic}
\end{align}
\justify{where $\delta_t\in\mathbb{R}$ determines the constant portion of the baseline response, $\bm{w}_t\in\mathbb{R}^{D_x}$ are \textit{weights} that determine the portion based on features, and $\bm{\mu}_t\in\mathbb{R}^{D_z}$ are \textit{factors} that determine the portion based on factor loadings.}

Now, suppose a clinical trial is run over episodes where each episode corresponds to one sample or the recruitment of one patient. At each episode $\eta\in[H]$, the experimenter first recruits a patient from some subpopulation $i[\eta]\in[K]$. Then, they observe the baseline response of the patient they have recruited:
\begin{align}
    y_t[\eta] \sim \mathcal{N}_{\sigma^2}(\bar{y}_{i[\eta]t}) \quad\text{for}\quad t\in\{1,\ldots,T-1\} \label{eqn:_proofa}
\end{align}
where $\mathcal{N}_{\sigma^2}(\mu)$ is the normal distribution with mean $\mu$ and variance $\sigma^2$. This is called the \textit{pre-treatment period}.
Once the pre-treatment period ends at time $t=T$, the experimenter assigns the recruited patient either to the control group, $\alpha[\eta]=0$, or to the treatment group, $\alpha[\eta]=1$. Depending on this assignment, they either continue observing the baseline response or they observe the response affected by the treatment:
\begin{align}
    y_T[\eta] \sim \begin{cases}
        \mathcal{N}_{\sigma^2}(\bar{y}_{i[\eta]T}) &\text{if}\quad \alpha[\eta]=0 \\
        \mathcal{N}_{\sigma^2}(r_{i[\eta]} + \bar{y}_{i[\eta]T}) &\text{if}\quad \alpha[\eta]=1
    \end{cases} \label{eqn:_proofe}
\end{align}
where $r_i$ is the \textit{treatment effect} for subpopulation $i$.

\paragraph{Objective}
The experimenter seeks to identify all of the subpopulations with positive treatment effect
\begin{align}
    \mathcal{I}^* = \{i\in[K]: r_i > 0\}
\end{align}
at the end of the episode horizon $H$, without knowing parameters $\{\delta_t,\bm{w}_t,\bm{\mu}_t\}$ or observing factor loadings $\{\bm{z}_i\}$. However, we assume that features $\bm{x}_i$ are observed ahead of the clinical trial. Denoting with $\hat{\mathcal{I}}^*$ the experimenter's best estimate of $\mathcal{I}^*$, we measure their success through their (i)~\textit{true positive rate} (TPR) given by $\nicefrac{|\hat{\mathcal{I}}^*\cap\mathcal{I}^*|}{|\mathcal{I}^*|}$, and their (ii)~\textit{false positive rate} (FPR) given by $\nicefrac{|\hat{\mathcal{I}}^*\cap[K]\setminus\mathcal{I}^*|}{|[K]\setminus\mathcal{I}^*|}$.

\justify{\paragraph{On Modeling Assumptions}
We formulated the problem of subpopulation selection with a linear model (cf.\ Equation \ref{eqn:synthetic}) and normal error distributions (cf.\ Equation \ref{eqn:_proofa}). It should be emphasized that these are not just arbitrary assumptions that we made because they are common in the literature. Instead, they are careful modeling decisions that are particularly suitable for capturing population-level effects as the units of interest in our problem happen to be populations of patients rather than individual patients themselves. Regarding linearity, \citet{shi2022assumptions} explain how linear relationships can emerge on a population level when outcomes are averaged over multiple individuals, even when the generative process for an individual is non-linear---briefly, this is due to the inherent linearity of the expectation operator. Similarly, error distributions for outcomes averaged over multiple individuals can be approximated as normal due to the central limit theorem. We will evaluate the performance of \textsc{Syntax} under model mismatch---that is when the linearity in Equation~\ref{eqn:synthetic} is violated---during our experiments in Section~\ref{sec:experiments}.}

\section{ESTIMATING TREATMENT EFFECTS WITH SYNTHETIC CONTROLS}

As it will become more apparent why in Section~\ref{sec:syntax}, efficient exploration of the subpopulations and treatment groups rely on forming estimators of the treatment effects that have low variance. In order to find such estimators, we first start by analyzing the case where recruitment decisions $i[\eta],\alpha[\eta]$ are made independently from observed responses $y_t[\eta]$ so that the observed responses remain independent from each other---that is $y_t[\eta]\perp\!\!\!\perp y_{t'}[\eta']$ if $t\neq t'$ or $\eta\neq\eta'$ for all time steps $t,t'\in[T]$ and episodes $\eta,\eta'\in[H]$.
For a given episode~$\eta$, denote with
\begin{align}
    n_i^{\mathsmaller{(\alpha)}}=\sum\nolimits_{\eta'<\eta}\mathbb{I}\{i[\eta']=i,\allowbreak\alpha[\eta']=\alpha\}
\end{align}
\justify{the number of patients recruited from subpopulation~$i$ into treatment group $\alpha$ until that episode, and with}

\vspace{-\baselineskip-\parskip}
\begin{align}
    \hat{y}_{it}^{\mathsmaller{(\alpha)}}=\sum\nolimits_{\eta'<\eta}y_t[\eta']\cdot\frac{\mathbb{I}\{i[\eta']=i,\allowbreak\alpha[\eta']=\alpha\}}{n_i^{\mathsmaller{(\alpha)}}}
\end{align}
the empirical mean responses of those patients. Similarly, denote with $n_i=n_i^{\mathsmaller{(0)}}+n_i^{\mathsmaller{(1)}}$ the total number of patients recruited from subpopulation $i$ (regardless of their treatment group) and with $\hat{y}_{it}=(\hat{y}_{it}^{\mathsmaller{(0)}}n_i^{\mathsmaller{(0)}}+\hat{y}_{it}^{\mathsmaller{(1)}}n_i^{\mathsmaller{(1)}})/n_i$ the empirical mean responses of those patients. We let $\smash{N^{\mathsmaller{(\alpha)}}}=\smash{\diag(n_1^{\mathsmaller{(\alpha)}},\ldots,n_K^{\mathsmaller{(\alpha)}})}$, $N=\smash{\diag(n_1,\ldots,n_K)}$, $\smash{\bm{\hat{y}}_{\cdot T}^{\mathsmaller{(\alpha)}}}=\smash{[\hat{y}_{1T}^{\mathsmaller{(\alpha)}}\cdots\hat{y}_{KT}^{\mathsmaller{(\alpha)}}]^{\mathsf{T}}}$, $\smash{\bm{\hat{y}}_{i\neg T}}=\smash{[\hat{y}_{i1}\cdots\hat{y}_{i(T-1)}]^{\mathsf{T}}}$, and $\smash{\hat{Y}_{\neg T}}=\smash{[\bm{\hat{y}}_{1\neg T}\cdots\bm{\hat{y}}_{K\neg T}]}$.

Ignoring any potential relationship between the responses of different subpopulations, a naive estimate for the treatment effect $r_i$ can be written as
\begin{align}
    \hat{r}_i^{~\text{naive}} = \hat{y}_{iT}^{\mathsmaller{(1)}} - \hat{y}_{iT}^{\mathsmaller{(0)}} \label{eqn:naive}
\end{align}
which is an unbiased estimate of treatment effects, $\mathbb{E}[r_i-\hat{r}_i^{~\text{naive}}]=0$, and has a variance of $\mathbb{V}[r_i-\hat{r}_i^{~\text{naive}}]=\sigma^2( 1/n_i^{\mathsmaller{(0)}} + 1/n_i^{\mathsmaller{(1)}} )$.
However, considering true mean responses $\bar{y}_{iT}$ are related to each other as described in \eqref{eqn:synthetic}, it should be possible to form other estimates with tighter variances. Specifically, if the joint feature/factor space has lower dimensionality than the number of subpopulations---that is $D_x+D_z < K$---one can explore the feature/factor space more efficiently than the subpopulations themselves. Moreover, if the pre-treatment responses $\{y_t\}_{t<T}$ are observed for a time period long enough to infer unobserved factor loadings $\{\bm{z}_i\}$---that is $T>D_z$---such exploration could be feasible. We aim to achieve this via synthetic control.

\paragraph{Synthetic Control}
Rather than using $\smash{\hat{y}_{iT}^{\mathsmaller{(0)}}}$ as our control when estimating $r_i$, we first decompose subpopulation $i$ as a linear combination of other subpopulations with weights $\bm{\beta}\in\mathbb{R}^K$ such that $\bm{x}_i=X\bm{\beta}$, $\bm{z}_i\approx Z\bm{\beta}$, and the elements of $\bm{\beta}$  sum up to one.%
\footnote{It is intentional here that $\bm{x}_i=X\bm{\beta}$, $\bm{z}_i\approx Z\bm{\beta}$ and \textit{not} $\bm{x}_i=X_{\neg i}\bm{\beta}$, $\bm{z}_i\approx Z_{\neg i}\bm{\beta}$. This is part of our contribution: Unlike previous work on synthetic control, we allow weights~$\bm{\beta}$ to include a subpopulation itself such that $\bm{x}_i=\bm{x}_i\beta_i+X_{\neg i}\bm{\beta}_{\neg i}$, $\bm{z}_i\approx \bm{z}_i\beta_i+Z_{\neg i}\bm{\beta}_{\neg i}$. Of course, one trivial $\bm{\beta}$ that satisfies this condition would be $\beta_i=1$, $\bm{\beta}_{\neg i}=\bm{0}$ (i.e.\ $\bm{\beta}=\bm{1}_i$). However, as we will discuss next, alternative weights can lead to lower-variance estimates.}
Then, \eqref{eqn:synthetic} would imply that $\bar{y}_{iT}\approx \bm{\beta}^{\mathsf{T}}\bm{\bar{y}}_{\cdot T}$ hence we can use $\bm{\beta}^{\mathsf{T}}\bm{\hat{y}}_{\cdot T}^{\mathsmaller{(0)}}$ as our control instead. This leads to a family of \textit{synthetic estimators}:
\begin{align}
    \hat{r}_i(\bm{\beta}) = \hat{y}_{iT}^{\mathsmaller{(1)}} - \bm{\beta}^{\mathsf{T}}\bm{\hat{y}}_{\cdot T}^{\mathsmaller{(0)}} \label{eqn:estimate}
\end{align}
In Proposition~\ref{prop:main}, we show that synthetic estimators become unbiased when $\bm{\beta}$ satisfies verifiable conditions that do not depend on unknown or unobservable quantities such as factors $\bm{\mu}_t$ or factor loadings $\bm{z}_i$. Moreover, we provide an upper bound on their variance. As we will argue next, synthetic estimators include the naive estimate in \eqref{eqn:naive} as well, moreover, the tightness of the variance bound in Proposition~\ref{prop:main} makes it possible to find synthetic estimates with even lower variances than the naive estimate.

\begin{proposition} \label{prop:main}
    \justify{Assuming $M_{\neg T}=[\bm{\mu}_1\cdots\bm{\mu}_{T-1}]$ has full rank and $T>D_z$, we have $\mathbb{E}[r_i-\hat{r}_i(\bm{\beta})] = 0$ and}

    \vspace{-\baselineskip-\parskip}
    ~
    \begin{align}
        &\mathbb{V}[r_i-\hat{r}_i(\bm{\beta})] \nonumber \\
        &\hspace{6pt} \leq V_i(\bm{\beta};N^{\mathsmaller{(0)}},N^{\mathsmaller{(1)}}) \nonumber \\ 
        &\hspace{6pt} \doteq \sigma^2\big( \underbrace{1/n_i^{\mathsmaller{(1)}}+\|\bm{\beta}\|_{(N^{\mathsmaller{(0)}})^{-1}}^2}_{\text{\normalfont epistemic uncertainty}} + \underbrace{\lambda\|\bm{\beta}-\bm{1}_i\|_{N^{-1}}^2}_{\text{\normalfont representation error}} \!\!\big)
    \end{align}
    for $\lambda = \|M_{\neg T}^{\mathsf{T}}(M_{\neg T}M_{\neg T}^{\mathsf{T}})^{-1}\bm{\mu}_T\|^2$ when $\bm{\beta}$ is such that $\bm{x}_i=X\bm{\beta}$, $\bm{\hat{y}}_{i\neg T}=\hat{Y}_{\neg T}\bm{\beta}$, and $\bm{1}^{\mathsf{T}}\bm{\beta}=1$.
\end{proposition}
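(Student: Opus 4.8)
The plan is to decompose the estimation error $r_i-\hat{r}_i(\bm{\beta})$ into three mean-zero pieces and then add up their variances. Writing each empirical mean as its population counterpart plus sampling noise, $\hat{y}^{(1)}_{iT}=r_i+\bar{y}_{iT}+\xi^{(1)}_{iT}$ and $\hat{y}^{(0)}_{jT}=\bar{y}_{jT}+\xi^{(0)}_{jT}$ with $\xi^{(1)}_{iT},\xi^{(0)}_{jT}$ mean-zero of variances $\sigma^2/n^{(1)}_i$ and $\sigma^2/n^{(0)}_j$, I would substitute the synthetic model \eqref{eqn:synthetic} into $r_i-\hat{r}_i(\bm{\beta})$ and use the two deterministic constraints $X\bm{\beta}=\bm{x}_i$ and $\bm{1}^{\mathsf{T}}\bm{\beta}=1$ to cancel the constant term $\delta_T$ and the feature term $\bm{w}_T^{\mathsf{T}}\bm{x}_i$. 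What survives is
\begin{align}
    r_i-\hat{r}_i(\bm{\beta}) = \bm{\mu}_T^{\mathsf{T}}(\bm{z}_i-Z\bm{\beta}) - \xi^{(1)}_{iT} + \textstyle\sum_j\beta_j\xi^{(0)}_{jT}, \nonumber
\end{align}
so the only obstruction to an exact cancellation is the factor-loading mismatch $\bm{\mu}_T^{\mathsf{T}}(\bm{z}_i-Z\bm{\beta})$, which is precisely the representation error.

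The crux is to re-express this mismatch through observable, pre-treatment quantities. Using the model \eqref{eqn:synthetic} for $t<T$ together with the same two constraints gives, at every pre-treatment step, $\sum_j\beta_j\bar{y}_{jt}-\bar{y}_{it}=\bm{\mu}_t^{\mathsf{T}}(Z\bm{\beta}-\bm{z}_i)$; stacking over $t\in\{1,\ldots,T-1\}$ yields $\bar{Y}_{\neg T}\bm{\beta}-\bm{\bar{y}}_{i\neg T}=M_{\neg T}^{\mathsf{T}}(Z\bm{\beta}-\bm{z}_i)$, where $\bar{Y}_{\neg T}$ and $\bm{\bar{y}}_{i\neg T}$ are the true-mean analogues of $\hat{Y}_{\neg T}$ and $\bm{\hat{y}}_{i\neg T}$. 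I would then invoke the empirical matching condition $\bm{\hat{y}}_{i\neg T}=\hat{Y}_{\neg T}\bm{\beta}$ to replace the unobservable left-hand side by the negated pre-treatment sampling noise, obtaining $M_{\neg T}^{\mathsf{T}}(Z\bm{\beta}-\bm{z}_i)=\bm{\hat{\epsilon}}_{i\neg T}-\hat{E}_{\neg T}\bm{\beta}$, where $\bm{\hat{\epsilon}}_{i\neg T}$ and $\hat{E}_{\neg T}$ collect the pre-treatment estimation errors of subpopulation $i$ and of all subpopulations, respectively. Because $M_{\neg T}$ has full rank and $T>D_z$, the tall matrix $M_{\neg T}^{\mathsf{T}}$ admits the left inverse $(M_{\neg T}M_{\neg T}^{\mathsf{T}})^{-1}M_{\neg T}$; applying $\bm{\mu}_T^{\mathsf{T}}(M_{\neg T}M_{\neg T}^{\mathsf{T}})^{-1}M_{\neg T}$ to both sides and setting $\bm{\gamma}=M_{\neg T}^{\mathsf{T}}(M_{\neg T}M_{\neg T}^{\mathsf{T}})^{-1}\bm{\mu}_T$ collapses the representation error to $\bm{\mu}_T^{\mathsf{T}}(\bm{z}_i-Z\bm{\beta})=\bm{\gamma}^{\mathsf{T}}(\hat{E}_{\neg T}\bm{\beta}-\bm{\hat{\epsilon}}_{i\neg T})$, a linear combination of mean-zero sampling errors with $\|\bm{\gamma}\|^2=\lambda$. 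Unbiasedness then follows immediately, since each of the three terms has expectation zero.

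For the variance I would argue that the three terms are pairwise uncorrelated and add their variances. The treatment-group noise $\xi^{(1)}_{iT}$ and the control-group noise $\{\xi^{(0)}_{jT}\}$ come from disjoint patients, and both are drawn at time $T$ whereas the representation error involves only the pre-treatment steps $t<T$; by the independence-of-responses assumption stated at the start of the section, all cross-covariances vanish. The first two terms contribute $\sigma^2/n^{(1)}_i$ and $\sigma^2\sum_j\beta_j^2/n^{(0)}_j=\sigma^2\|\bm{\beta}\|^2_{(N^{(0)})^{-1}}$. For the representation error, the per-step quantities $\sum_j(\bm{1}_i-\bm{\beta})_j\epsilon_{jt}$ are independent across $t$ with common variance $\sigma^2\|\bm{\beta}-\bm{1}_i\|^2_{N^{-1}}$, so its variance is $\|\bm{\gamma}\|^2\sigma^2\|\bm{\beta}-\bm{1}_i\|^2_{N^{-1}}=\lambda\sigma^2\|\bm{\beta}-\bm{1}_i\|^2_{N^{-1}}$, exactly the third term of the bound. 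I expect the one genuinely delicate point---and the reason the statement is phrased as an inequality rather than an equality---to be that $\bm{\beta}$ is itself constructed from the very pre-treatment data whose noise appears in the representation error, so $\bm{\beta}$ and $\{\epsilon_{jt}\}_{t<T}$ are not independent. The decomposition above is exact once we condition on the realized pre-treatment means (equivalently, treat the verified $\bm{\beta}$ as fixed), and the residual dependence is what the bound $V_i(\bm{\beta};N^{(0)},N^{(1)})$ absorbs as slack.
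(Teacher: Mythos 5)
Your proposal is correct and follows essentially the same route as the paper's own proof: the same noise decomposition of $r_i-\hat{r}_i(\bm{\beta})$ using the constraints $\bm{x}_i=X\bm{\beta}$ and $\bm{1}^{\mathsf{T}}\bm{\beta}=1$, the same key step of converting the pre-treatment matching condition $\bm{\hat{y}}_{i\neg T}=\hat{Y}_{\neg T}\bm{\beta}$ into $\bm{z}_i-Z\bm{\beta}=\left(M_{\neg T}^{\mathsf{T}}(M_{\neg T}M_{\neg T}^{\mathsf{T}})^{-1}\right)^{\mathsf{T}}E_{\neg T}(\bm{\beta}-\bm{1}_i)$ via the pseudo-inverse of $M_{\neg T}$, and the same summation of variances of independent noise terms (with the same implicit treatment of $\bm{\beta}$ as fixed, a subtlety you flag more explicitly than the paper does). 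The only difference is the final bookkeeping: you compute the variance of the representation-error term directly and exactly, whereas the paper first applies Cauchy--Schwarz to factor out $\lambda=\|M_{\neg T}^{\mathsf{T}}(M_{\neg T}M_{\neg T}^{\mathsf{T}})^{-1}\bm{\mu}_T\|^2$ and then takes expectations---your version is, if anything, the cleaner of the two.
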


\begin{proof}
    All proofs can be found in the appendix.
\end{proof}

\subsection{Interpreting Proposition~\ref{prop:main}}

To gain a more intuitive understanding of the bound in Proposition~\ref{prop:main}, first consider the trivial decomposition of subpopulation $i$ as itself---that is $\bm{\beta}=\bm{1}_i$ where $(\bm{1}_i)_j=\mathbb{I}\{i=j\}$. Normally, having $\smash{\bm{\hat{y}}_{i\neg T}=\hat{Y}_{\neg T}\bm{\beta}}$ would not guarantee $\bm{z}_i=Z\bm{\beta}$ due to the observation noise. However, for the trivial decomposition uniquely, we know for certain that $\bm{z}_i=Z\bm{\beta}$ holds. In other words, $\bm{\beta}=\bm{1}_i$ is the only known representation of subpopulation $i$ that is \textit{perfectly} accurate even in the latent factor space. The \textit{representation error} $\lambda\|\bm{\beta}-\bm{1}_i\|^2_{N^{-1}}$ takes deviations from this perfect representation into account when quantifying the variance of synthetic estimators. Synthetic control relies on matching observed responses $\hat{y}$ between a subpopulation~$i$ and its representation~$\bm{\beta}$ such that $\bm{\hat{y}}_{i\neg T}=\hat{Y}_{\neg T}\bm{\beta}$ as it leads to a near match in terms of unobservable factor loadings $\bm{z}_i\approx Z\bm{\beta}$ as well, and notably, the deviations from the perfect representation are weighted by $N^{-1}$ which corresponds to the uncertainty of observed responses $\smash{\hat{Y}_{\neg T}}$.

The remaining terms capture the \textit{epistemic uncertainty} of $\hat{y}_{iT}^{\mathsmaller{(1)}}$ and $\bm{\hat{y}}_{\cdot T}^{\mathsmaller{(0)}}$ which are composed together to form the final estimate $\hat{r}_i(\bm{\beta})$. Even when $\bm{\beta}$ is a perfect representation, the accuracy of $\hat{r}_i(\bm{\beta})$ is limited by how close $\hat{y}_{iT}^{\mathsmaller{(1)}}$ and $\bm{\hat{y}}_{\cdot T}^{\mathsmaller{(0)}}$ are to their ground-truth values.

\justify{Importantly, the trivial representation $\bm{\beta}=\bm{1}_i$ (of subpopulation $i$ as itself) recovers the naive estimate in \eqref{eqn:naive}:}

\vspace{-\baselineskip-\parskip}
~
\begin{align}
    \hat{r}_i(\bm{1}_i)=\hat{r}_i^{~\text{naive}}
\end{align}
Moreover, when $\bm{\beta}=\bm{1}_i$, the representation error in $V_i(\bm{\beta})$ vanishes and the epistemic uncertainty becomes identical to
the naive estimate's variance---that is $V_i(\bm{1}_i) = \mathbb{V}[r_i-\hat{r}_i^{~\text{naive}}]$.
This means that the family of estimators $\hat{r}_i(\bm{\beta})$ not only include the naive estimate but also the variance bound in Proposition~\ref{prop:main} is optimally tight for the naive estimate.

Having made this important observation, the key idea behind \textsc{Syntax} is to search for $\bm{\beta}$ that leads to even tighter variance bounds $V_i(\bm{\beta})$. This can easily be achieved by solving the quadratic program
\begin{align}
    \bm{\beta}^*_i = \argmin_{\bm{\beta}:\,\bm{x}_i=X\bm{\beta},\,\bm{\hat{y}}_{i\neg T}=\hat{Y}_{\neg T}\bm{\beta},\,\bm{1}^{\mathsf{T}}\bm{\beta}=1} V_i(\bm{\beta}) \label{eqn:_proofj}
\end{align}
Due to Proposition~\ref{prop:main}, it is guaranteed that $\hat{r}_i(\bm{\beta}^*_i)$ has variance at least as small as that of the naive estimator.

\begin{proposition} \label{prop:corollary}
    $\mathbb{V}[r_i-\hat{r}_i(\bm{\beta}^*_i)]\leq\mathbb{V}[r_i-\hat{r}_i^{~\text{naive}}]$.
\end{proposition}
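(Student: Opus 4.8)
The plan is to exploit two facts already established for the naive estimator: that it corresponds to the trivial feasible point $\bm{\beta}=\bm{1}_i$, and that the variance bound of Proposition~\ref{prop:main} is tight there. The argument is then a short chain of inequalities anchored by the optimality of $\bm{\beta}^*_i$.

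First I would check that $\bm{\beta}=\bm{1}_i$ lies in the feasible set of the quadratic program \eqref{eqn:_proofj}. This is immediate: multiplying a matrix by the $i$-th standard basis vector selects its $i$-th column, so $X\bm{1}_i=\bm{x}_i$ and $\hat{Y}_{\neg T}\bm{1}_i=\bm{\hat{y}}_{i\neg T}$, while $\bm{1}^{\mathsf{T}}\bm{1}_i=1$. Hence all three constraints $\bm{x}_i=X\bm{\beta}$, $\bm{\hat{y}}_{i\neg T}=\hat{Y}_{\neg T}\bm{\beta}$, $\bm{1}^{\mathsf{T}}\bm{\beta}=1$ hold, so $\bm{1}_i$ competes in the minimization defining $\bm{\beta}^*_i$.

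Since $\bm{\beta}^*_i$ minimizes $V_i(\cdot)$ over this feasible set and $\bm{1}_i$ is feasible, optimality yields $V_i(\bm{\beta}^*_i)\leq V_i(\bm{1}_i)$. Applying the bound of Proposition~\ref{prop:main} to $\bm{\beta}^*_i$---legitimate precisely because $\bm{\beta}^*_i$ satisfies the proposition's hypotheses by construction---together with the already-noted identity $V_i(\bm{1}_i)=\mathbb{V}[r_i-\hat{r}_i^{~\text{naive}}]$ gives
\[
\mathbb{V}[r_i-\hat{r}_i(\bm{\beta}^*_i)]\leq V_i(\bm{\beta}^*_i)\leq V_i(\bm{1}_i)=\mathbb{V}[r_i-\hat{r}_i^{~\text{naive}}],
\]
which is exactly the claim.

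There is no genuine obstacle; the result is a direct corollary of Proposition~\ref{prop:main} and the feasibility of the trivial decomposition. The only points needing care are confirming that Proposition~\ref{prop:main} applies to $\bm{\beta}^*_i$ (its defining constraints coincide with the proposition's hypotheses) and recalling why $V_i$ collapses to the naive variance at $\bm{\beta}=\bm{1}_i$: the representation-error term $\lambda\|\bm{\beta}-\bm{1}_i\|_{N^{-1}}^2$ vanishes and $\|\bm{1}_i\|_{(N^{\mathsmaller{(0)}})^{-1}}^2=1/n_i^{\mathsmaller{(0)}}$, leaving precisely $\sigma^2(1/n_i^{\mathsmaller{(1)}}+1/n_i^{\mathsmaller{(0)}})$.
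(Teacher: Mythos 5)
Your proof is correct and follows essentially the same argument as the paper's: the chain $\mathbb{V}[r_i-\hat{r}_i(\bm{\beta}^*_i)]\leq V_i(\bm{\beta}^*_i)\leq V_i(\bm{1}_i)=\mathbb{V}[r_i-\hat{r}_i^{~\text{naive}}]$, using Proposition~\ref{prop:main} for the first inequality and optimality of $\bm{\beta}^*_i$ in \eqref{eqn:_proofj} for the second. Your explicit verification that $\bm{1}_i$ is feasible and that $V_i(\bm{1}_i)$ collapses to the naive variance merely spells out details the paper states in its surrounding discussion.
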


\subsection{When is synthetic control the most effective in estimating treatment effects?}
\label{sec:intuition}

Proposition~\ref{prop:main} provides an intuition as to when synthetic control might be the most effective in estimating treatment effects. Notice that, in Proposition~\ref{prop:main}, the representation error contributes to the total variance proportionally to constant $\lambda = \|M_{\neg T}^{\mathsf{T}}(M_{\neg T}M_{\neg T}^{\mathsf{T}})^{-1}\bm{\mu}_T\|^2$. Earlier, we have discussed how representation error is caused mainly due to a mismatch between a subpopulation's factor loadings $\bm{z}_i$ and the synthetic composition $Z^{\mathsf{T}}\bm{\beta}$. As such, it is no surprise that $\lambda$ exclusively depends on factors~$\bm{\mu}_t$. Since
\begin{align}
        \lambda = \|M_{\neg T}^{\mathsf{T}}(M_{\neg T}M_{\neg T}^{\mathsf{T}})^{-1}\bm{\mu}_T\|^2 &\leq (\nicefrac{\|\bm{\mu}_T\|}{\|M_{\neg T}\|})^2
\end{align}
we also expect that $\lambda$---hence the contribution of representation error to the total variance---is smaller when post-treatment factors are small in magnitude (i.e.\ $\|\bm{\mu}_T\|$ is small) and pre-treatment factors are large in magnitude (i.e.\ $\|M_{\neg T}\|$).

This makes intuitive sense as well: With synthetic control, we are essentially trying to infer factor loadings $\bm{z}_i$ from pre-treatment responses $\{y_t\}_{t<T}$ in order to estimate the post-treatment responses $y_T$. During the pre-treatment period, if the contribution of factors to responses is stronger relative to the noise level---that is when $\|M_{\neg T}\|/\sigma$ is large---we expect our inference of factor loadings to be more accurate. Later, when we estimate the post-treatment response $y_T$, if the contribution of factors to the final response is now weaker relative to the noise level---that is when $\|\bm{\mu}_T\|/\sigma$ is small---we expect to gain more from our earlier \mbox{inference} of the factor loadings compared with a naive estimate (which now has to detect a weaker signal without any additional information from the previous time steps $t<T$ during the pre-treatment period). We will confirm this intuition empirically with simulations in Section~\ref{sec:experiments}.

\section{SYNTAX}
\label{sec:syntax}

Making use of the synthetic estimator~$\hat{r}_i(\bm{\beta}^*)$ given by \eqref{eqn:_proofj} and the upper bound on its variance given in Proposition~\ref{prop:main}, we propose \textsc{Syntax} described in Algorithm~\ref{alg:syntax}. It is an adaptation of the algorithm of \citet{locatelli2016optimal} for solving thresholding bandit problems. At each episode~$\eta\in[H]$:

\begin{enumerate}[label=(\roman*),nosep,parsep=\parskip,leftmargin=18pt]
    \item First, a \textit{sensitivity index} is computed for each subpopulation $i\in[K]$:
    \begin{align}
        S_i = \frac{|\hat{r}_i(\bm{\beta}^*_i)|}{\sqrt{V_i(\bm{\beta}^*_i)}} \leq \frac{|\hat{r}_i(\bm{\beta}^*_i)|}{\sqrt{\mathbb{V}[r_i-\hat{r}_i(\bm{\beta}^*_i)]}}
    \end{align}
    Intuitively, the lower the sensitivity index of a subpopulation is, the harder it is to determine whether its treatment effect is positive or not.
    \item Then, the subpopulation~$i^*=\argmin S_i$ with the lowest sensitivity index is determined and the patient that is expected to cause the largest increase in $S_{i^*}$ is recruited.
\end{enumerate}

\begin{algorithm}[H]
    \caption{\textsc{Syntax}}
    \label{alg:syntax}
    \vspace{2.7pt}
    \textbf{Parameters:} Horizon~$H$, factor effect parameter~$\lambda$ \\
    \textbf{Output:} Subpopulations~$\hat{\mathcal{I}}^*$ with positive treat.\ effect%
    \vspace{2.7pt}
    \hrule
    \begin{algorithmic}[1]
        \STATE $n_i^{\mathsmaller{(\alpha)}}\gets 0$, $\hat{y}_{iT}^{\mathsmaller{(\alpha)}}\gets 0$, $\bm{\hat{y}}_{i\neg T}\gets\bm{0}$, $\forall i\in[K],\alpha\in\{0,1\}$%
        \FOR{$\eta\in\{1,2,\ldots,H\}$}
            \STATE $i^* \gets \argmin_{i\in[K]} S_i$
            \STATE $i,\alpha \gets \argmin_{i\in[K],\alpha\in\{0,1\}}$ \\
            \hspace{26.5pt} $\min_{\bm{\beta}:\: \bm{x}_{i^*}=X\bm{\beta},\, \bm{\hat{y}}_{i^*\neg T}=\hat{Y}_{\neg T}\bm{\beta},\,\bm{1}^{\mathsf{T}}\bm{\beta}=1}$ \\
            \hspace{26.5pt} $V_{i^*}(\bm{\beta}; N^{\mathsmaller{(0)}}\!+\!(1-\alpha)\bm{1}_i{\bm{1}_i}^{\mathsf{T}}, N^{\mathsmaller{(1)}}\!+\!\alpha\bm{1}_i{\bm{1}_i}^{\mathsf{T}})\!$
            \STATE Recruit from population~$i$ and treatment group~$\alpha$%
            \STATE Observe baseline outcomes $\bm{y}_{\neg T}=[y_1\cdots y_{T-1}]^{\mathsf{T}}$ 
            \\ \hfill and the final outcome $y_T$
            \STATE $n_i^{\mathsmaller{(\alpha)}}\gets n_i^{\mathsmaller{(\alpha)}}+1$
            \STATE $\bm{\hat{y}}_{i\neg T}\gets \bm{\hat{y}}_{i\neg T} + (\bm{y}_{\neg T}-\bm{\hat{y}}_{i\neg T})/n_i$
            \STATE $\hat{y}_{iT}^{\mathsmaller{(\alpha)}}\gets \hat{y}_{iT}^{\mathsmaller{(\alpha)}} + (y_T-\hat{y}_{iT}^{\mathsmaller{(\alpha)}})/n_i^{\mathsmaller{(\alpha)}}$
        \ENDFOR
        \STATE $\hat{\mathcal{I}}^*\gets \{i\in[K]: \hat{r}_i(\bm{\beta}^*_i) > 0\}$
    \end{algorithmic}
\end{algorithm}

Finally, when the experiment ends, \textsc{Syntax} simply returns all subpopulations that have a positive treatment effect estimate: $\hat{\mathcal{I}}^*=\{i\in[K]:\hat{r}_i(\bm{\beta}^*_i)\}$. Note that, computing $V_i(\bm{\beta})$ hence sensitivity indices $S_i$ requires specifying $\lambda$. The ideal $\lambda$ given Proposition~\ref{prop:main} cannot be computed in practice as it depends on unknown factors $\bm{\mu}_t$, and we call it the \textit{factor effect parameter} due to this dependence.

\section{RELATED WORK}

\begin{table*}[t]
    \centering
    \caption{\textbf{Comparison of related trial designs.} \textsc{Syntax} makes inferences via synthetic control and recruits adaptively.}%
    \label{tab:related}%
    \vspace{-\baselineskip+6pt}

    \small
    \resizebox{.9\linewidth}{!}{%
        \begin{tabular}{@{}llll@{}}
            \toprule
            \bf Approach & \bf Inference Strategy & \bf Sampling Strategy & \bf Related Work \\
            \midrule
            Conventional studies & \multirow{2}{*}{\makecell[l]{Simple statistics\\\scriptsize$\hat{r}_i=\hat{y}_{iT}^{\mathsmaller{(1)}}-\hat{y}_{iT}^{\mathsmaller{(0)}}$}} & Randomized & -- \\
            Thresholding bandits & & Adaptive & \citet{locatelli2016optimal} \\
            \midrule
            Synthetic studies & \multirow{3}{*}{\makecell[l]{Synthetic control\\\scriptsize$\hat{r}_i=\hat{y}_{iT}^{\mathsmaller{(1)}}-(\bm{\beta}^*_i=\argmin_{\bm{\beta}} V_i(\bm{\beta}))^{\mathsf{T}}\bm{\hat{y}}_{\cdot T}^{\mathsmaller{(0)}}$}} & Randomized & \citet{abadie2015comparative} \\
            Synthetic design & & Pre-planned & \citet{doudchenko2021synthetic} \\
            \textbf{\textsc{Syntax}} & & Adaptive & \textbf{(Ours)} \\
            \bottomrule
        \end{tabular}}%
        \vspace{-\baselineskip+3pt}
\end{table*}

Our objective in this paper is to identify subpopulations with positive treatment effect given a fixed budget of samples. A trial designed to do so would consist of (i) an \textit{inference strategy} that dictates which subpopulation is identified at the end of the trial, and (ii) a \textit{sampling strategy} that dictates how the samples are allocated between different subpopulations as well as control and treatment groups. Our trial design, \textsc{Syntax}, happen to combine inference techniques from the \textit{synthetic control} literature and sampling principals originally developed for solving \textit{thresholding bandit} problems. Table~\ref{tab:related} summarizes alternative trial designs one might consider, which differ from \textsc{Syntax} in terms of their inference and sampling strategies. We give an overview of these alternative strategies in this section.

\paragraph{Synthetic Control}
\justify{It is not always practical to perform large randomized experiments to understand the effects of an intervention \citep{bica2020time}. For instance, investigating the effects of new policies targeting large geographic areas, often at the level of individual countries, is challenging. As an inference technique, synthetic control \citep{abadie2003abadie,abadie2010synthetic,abadie2015comparative} was first introduced to address this challenge. In typical synthetic control studies, the outcome observed for a single treated unit is compared against a control unit that is synthetically generated as a linear combination of other untreated units.}

\justify{More recently, \citet{doudchenko2021synthetic} introduced a trial design based on synthetic control, where treated and untreated units are not fixed but rather chosen by the designer before the experiment is run. Notably though, their trial design does not adapt to the outcomes observed sequentially once the experiment starts. In contrast, \textsc{Syntax} recruits and assigns patients---which is akin to choosing which units will be treated---in a fully adaptive way. \citet{farias2022synthetically} study the adaptive use of synthetic controls but in a setting orthogonal to ours: They consider treatment assignments over time steps $t$ for a single unit/patient whereas we consider the treatment assignment at a final time point $T$ for multiple units/patients over multiple episodes $\eta$.}

\paragraph{Thresholding Bandits}
As an online algorithm, \textsc{Syntax} is closely related to the multi-armed bandit literature \citep[e.g.][]{auer2002finite,huyuk2019analysis,huyuk2020thompson,huyuk2021multi}, and
our objective is most similar to that of the thresholding bandit problem \citep{locatelli2016optimal,zhong2017asynchronous,mukherjee2017thresholding,tao2019chao} and the good arm identification problem \citep{kano2019good,katz2020true}, which similarly aim to identify arms with rewards higher than a given threshold among a set of candidate arms. Subpopulations in our work can be thought of as arms in a thresholding bandit problem with one important caveat: While in thresholding bandits, each arm has only one reward distribution associated with it, in our setting, the treatment effect of each subpopulation depends on two response distributions that cannot be sampled from simultaneously. Complicating matters further, while there are underlying relationships between the responses of different subpopulations for the control group as described in \eqref{eqn:synthetic}, no such relationship exists for the treatment group.

Another related problem in the multi-armed bandit literature is the best arm identification (BAI) problem \citep{bubeck2009pure,audibert2010best,gabillon2011multi,gabillon2012best,soare2014best,xu2018fully,alieva2021robust,degenne2020gamification}. In our context, BAI would correspond to finding the subpopulation with the largest treatment effect. This objective is clinically less suitable as it is important to provide the treatment to all patients who would benefit from it, regardless of how little that benefit might be compared to other patients. Aside from ``pure-exploration'' problems such as thresholding bandits or BAI, there are numerous other work on multi-armed bandits that focus on maximizing the cumulative reward instead. In our context, this would correspond to maximizing the benefit received by all participant of a trial whereas clinical trials are usually exploratory in nature.

\paragraph{Adaptive Clinical Trials}
Finally, it should be mentioned that adaptive trial designs that target objectives other than ours exist. \citet{lewis1990sequential,whitehead1997design} consider when to terminate a trial, \citet{hu2006theory,berry2006bayesian,villar2015multi} maximizes the benefit for the recruited patients, \citet{bhatt2006adaptive} consider when to stop further investigating one of two subpopulations, \citet{oquigley1990continual,riviere2014bayesian,wages2015phase,yan2017keyboard,shen2020learning,lee2020contextual,lee2021sdfbayes} determine safe but effective dosage for treatments, and \citet{huyuk2023when} consider the portfolio-level management of trials. Notably, \citet{onur2019sequential,curth2023adaptively} also aim to identify subpopulations with positive treatment effects but not in the setting where pre-treatment responses are available for all participants.
Adaptive enrichment designs \citep{stallard2014adaptive} perform interim analyses to select subpopulations. However there, the focus is to seamlessly adapt the inclusion criteria of a confirmatory trial without compromising its type I error rate. In contrast, we focus purely on selecting subpopulations in an exploratory trial that can then be targeted in a confirmatory trial.

\vspace{-3pt}
\section{EXPERIMENTS}
\label{sec:experiments}
\vspace{-3pt}

\begin{table*}
    \caption{\textbf{Performance comparison of \textsc{Syntax} and benchmarking algorithms.} Inline with our intuition from Section~\ref{sec:intuition}, \textsc{Syntax} performs the best for \textit{Diminishing Factor Effects}. Again inline with our intuition, it does not provide any benefit for \textit{Increasing Factor Effects} but still performs on par with \textit{Thresholding Bandits} since synthetic estimators have variances at least as low as the naive estimate (as stated in Proposition~\ref{prop:corollary}).}%
    \label{tab:results}%
    \vspace{-3pt}%
    
    \small
    \resizebox{\linewidth}{!}{%
        \begin{tabular}{@{}l*{8}{c@{~}c}@{}}
            \toprule
            & \multicolumn{8}{c}{\textbf{Diminishing Factor Effects}} & \multicolumn{8}{c}{\textbf{Increasing Factor Effects}} \\
            \cmidrule(lr){2-9} \cmidrule(l){10-17} 
            & \multicolumn{4}{c}{$H=200$} & \multicolumn{4}{c}{$H=400$} & \multicolumn{4}{c}{$H=200$} & \multicolumn{4}{c}{$H=400$} \\
            \cmidrule(lr){2-5} \cmidrule(lr){6-9} \cmidrule(lr){10-13} \cmidrule(l){14-17}
            \textbf{Algorithm} & \multicolumn{2}{c}{FPR} & \multicolumn{2}{c}{TPR} & \multicolumn{2}{c}{FPR} & \multicolumn{2}{c}{TPR} & \multicolumn{2}{c}{FPR} & \multicolumn{2}{c}{TPR} & \multicolumn{2}{c}{FPR} & \multicolumn{2}{c}{TPR} \\
            \midrule
            Conventional study
                & 19.5\% & (0.2\%) & 80.7\% & (0.3\%)
                & 14.9\% & (0.3\%) & 85.4\% & (0.3\%)
                & 19.5\% & (0.2\%) & 80.7\% & (0.3\%)
                & 14.9\% & (0.3\%) & 85.4\% & (0.3\%)
                \\
            Thresholding bandits
                & 17.6\% & (0.4\%) & 82.6\% & (0.4\%)
                & 13.7\% & (0.4\%) & 86.4\% & (0.2\%)
                & \bf 17.6\% & \bf (0.4\%) & \bf 82.6\% & \bf (0.4\%)
                & \bf 13.7\% & \bf (0.4\%) & \bf 86.4\% & \bf (0.2\%)
                \\
            Synthetic study
                & 16.7\% & (0.3\%) & 83.4\% & (0.3\%)
                & 12.5\% & (0.3\%) & 87.7\% & (0.2\%)
                & 19.5\% & (0.2\%) & 80.7\% & (0.3\%)
                & 14.9\% & (0.3\%) & 85.4\% & (0.3\%)
                \\
            Synthetic design
                & 16.4\% & (0.4\%) & 83.8\% & (0.4\%)
                & 12.1\% & (0.4\%) & 88.2\% & (0.3\%)
                & 19.7\% & (0.4\%) & 80.5\% & (0.3\%)
                & 14.9\% & (0.3\%) & 85.4\% & (0.4\%)
                \\
            \midrule
            \textbf{\textsc{Syntax}}
                & \bf 14.6\% & \bf (0.4\%) & \bf 85.6\% & \bf (0.3\%)
                & \bf 11.0\% & \bf (0.3\%) & \bf 89.1\% & \bf (0.2\%)
                & \bf 17.5\% & \bf (0.4\%) & \bf 82.6\% & \bf (0.3\%)
                & \bf 13.7\% & \bf (0.4\%) & \bf 86.4\% & \bf (0.3\%)
                \\
            \bottomrule
        \end{tabular}}%
        \vspace{-3pt}%
\end{table*}

\begin{figure*}[t]
    \centering
    \begin{subfigure}{.5\linewidth}
        \includegraphics[width=.5\linewidth]{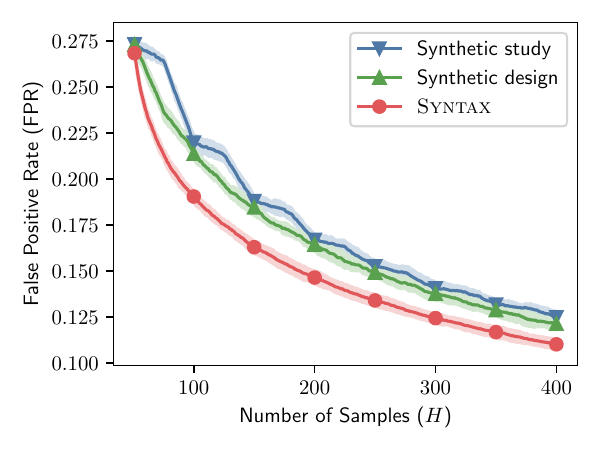}%
        \includegraphics[width=.5\linewidth]{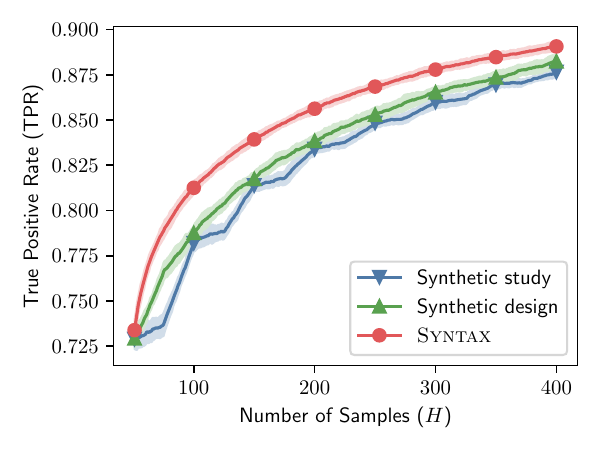}%
        \vspace{-6pt}%
        \caption{\textit{Synthetic Algorithms}}%
        \label{fig:exp-synthetic}%
    \end{subfigure}%
    \begin{subfigure}{.5\linewidth}
        \includegraphics[width=.5\linewidth]{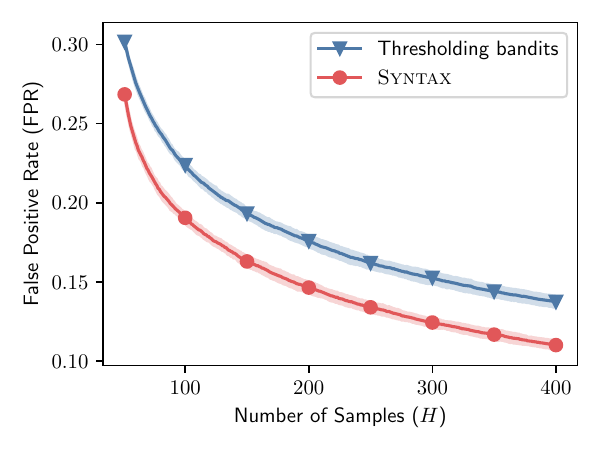}%
        \includegraphics[width=.5\linewidth]{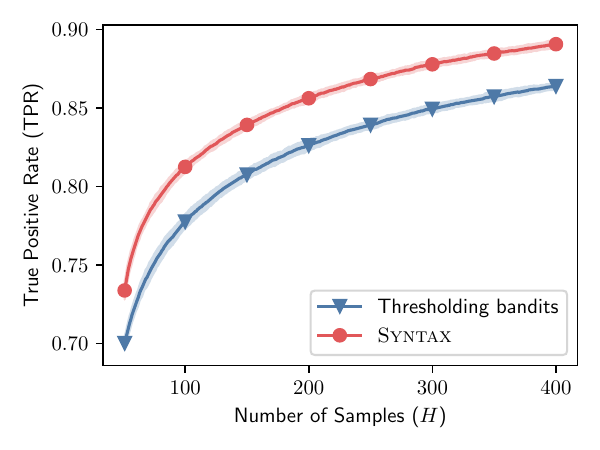}%
        \vspace{-6pt}%
        \caption{\textit{Adaptive Algorithms}}%
        \label{fig:exp-adaptive}%
    \end{subfigure}%
    \vspace{-\baselineskip+6pt}%
    \caption{\textbf{Comparison of (a) synthetic algorithms and (b) adaptive algorithms.} Switching to a pre-planned sampling strategy from an adaptive one or switching to a naive inference strategy from a synthetic one both cause FPR to increase and TPR to decrease at comparable scales.}%
    \vspace{-\baselineskip}%
\end{figure*}

\paragraph{Environments}
To confirm our earlier intuition in Section~\ref{sec:intuition} about when synthetic control might be the most effective, we consider two types of environments:

\begin{enumerate}[label=(\roman*),nosep,parsep=\parskip,leftmargin=18pt]
    \item \textit{Diminishing Factor Effects:} In these environments, the contributions of factors $\bm{\mu}_t$ to the baseline responses gets weaker over time. Formally, we set $\bm{\mu}_t = (2 - 10^{t-T})\bm{\mu'}_t$, where $\bm{\mu'}_t$ is sampled uniformly at random from the unit ball in $\mathbb{R}^{D_z}$.
    
    \item \textit{Increasing Factor Effects:} In contrast to \textit{Diminishing Factor Effects}, in these environments, the contributions of factors $\bm{\mu}_t$ to the baseline responses gets stronger over time. Formally, we set $\bm{\mu}_t = 10^{t-T}\bm{\mu'}_t$, where $\bm{\mu'}_t$ is sampled uniformly at random from the unit ball in $\mathbb{R}^{D_z}$.
\end{enumerate}

Remember that we expect our gain from synthetic control to be larger for \textit{Diminishing Factor Effects} since we infer factor loadings based on earlier responses---for which we would prefer factors to  provide a stronger signal---and we aim to estimate later responses---for which we would gain the most from our earlier inference if factors now provide a weaker signal that is harder to pick up with naive estimates.
We set $N=25$, $T=5$, $D_x=D_z=2$. Notably, $D_x+D_z<N$ and $T>D_z$. We sample the remaining parameters of the environment so that $x_{ij}\sim\mathcal{N}_1(0)$, $z_{ij}\sim\mathcal{N}_1(0)$, $\delta_t\sim\mathcal{N}_1(0)$, $r_i\sim\mathcal{N}_1(0)$, and $\bm{w}_t$ is picked uniformly at random from the unit ball in $\mathbb{R}^{D_x}$. We repeat all our experiments ten times to obtain error bars. During each repetition, we sample a new environment and we average FPR and TPR of all our benchmark algorithms over 1000 runs.

\justify{\paragraph{Benchmarks}
We adapt the alternative trial designs summarized in Table~\ref{tab:related} to our problem setting as benchmarks. Adapting \textit{Thresholding Bandits}, when there are two response distributions associated with each subpopulation rather than a single reward distribution, involves forming naive estimates as in \eqref{eqn:naive}. \textit{Synthetic Design} normally requires knowing the mean responses for the pre-treatment period before forming a fixed sampling plan. When adapting it, we allowed the sampling plan to be updated using the latest empirical mean responses $\hat{y}_{i\neg T}^{\mathsmaller{(\alpha)}}$ at each episode. However, unlike \textsc{Syntax}, \textit{Synthetic Design} still does not take the final responses $y_T$ into consideration when allocating samples.  Details regarding each benchmark can be found in the appendix.}

\paragraph{Main Results}
Table~\ref{tab:results} compares \textsc{Syntax} and other benchmarks in terms of their performance. We consider two metrics: (i) \textit{false positive rate} (FPR), which is the proportion of subpopulations incorrectly identified as having positive treatment effect among all subpopulations with negative treatment effect, and (ii) \textit{true positive rate} (TPR), which is the proportion of subpopulations correctly identifies as having positive treatment effect among all subpopulations with positive treatment effect. We see that our intuition from Section~\ref{sec:intuition} holds: \textsc{Syntax} performs the best for \textit{Diminishing Factor Effects}. Although it does not provide any additional benefit for \textit{Increasing Factor Effects}, it still performs on par with \textit{Thresholding Bandits}, which relies on the naive estimate in \eqref{eqn:naive}. This is because synthetic estimators given by \eqref{eqn:_proofj} have variances at least as low as the naive estimate (i.e.\ they are as informative) as stated in Proposition~\ref{prop:corollary}.

\justify{To understand
how much impact each design aspect of \textsc{Syntax} has on its performance, we compare all three synthetic algorithms in Figure~\ref{fig:exp-synthetic} and the two adaptive algorithms in Figure~\ref{fig:exp-adaptive} (for \textit{Diminishing Factor Effects}). These figures show how the FPR/TPR of different algorithms improve as the sample budget
$H$ gets larger. We see that switching to a naive inference strategy or a pre-planned sampling strategy both result in similar performance drops---indicating that synthetic and adaptive aspects of \textsc{Syntax} are equally important.}

\paragraph{Significance of the Main Results}
Looking at Table~\ref{tab:results}, the performance difference between \textsc{Syntax} and a conventional RCT might not seem like much at a first glance. After all, it is merely (!) a \newtilde 5\% difference in terms of FPR/TPR for \textit{Diminishing Factor Effects} with $H=200$. However, the implications of this seemingly small difference are significant in the larger context of clinical trials. We highlight two points: (i)~how hard it actually is to gain this \newtilde 5\% difference and (ii) what tangible benefits it would offer if the same gain were to be realized in practice:
\begin{enumerate}[label=(\roman*),nosep,parsep=\parskip,leftmargin=18pt]
    \item \textcolor{black}{Notice that the performance of a traditional RCT in terms of FPR/TPR also improves only by \newtilde 5\% when the sample size is increased to $H=400$. This means that achieving the same performance gain as \textsc{Syntax} with a traditional RCT requires doubling the size of the RCT---meaning just by allocating control samples smartly, \textsc{Syntax} is able to match the performance of a trial that has double the size.}
    
    \item \textcolor{black}{A typical RCT costs 12-35 million USD, which primarily scales with the number of patients involved \citep{moore2018estimated}, and takes 1-2 years, where as much as 86\% of trials get delayed due to failures to reach recruitment targets \citep{huang2018clinical}. Having the required number of samples effectively halved by designs like \textsc{Syntax} would save upwards of 17.5 million USD per trial---note that more than 9000 trials are launched each year in the US alone \citep{worldnumber}---and help deliver much faster results.}
\end{enumerate}

Similarly, the 1-2\% difference between \textsc{Syntax} and the next best-performing benchmark, \textit{Synthetic Design}, for \textit{Diminishing Factor Effects}, translates into a significant difference in terms of the sample size: \textsc{Syntax} with $H=150$ achieves an FPR of 16.3\% (0.4\%) and TPR of 83.9\% (0.2\%)---the same performance as synthetic design with $H=200$ but using 25\% fewer samples.

\begin{figure}
    \centering
    \includegraphics[width=.8\linewidth]{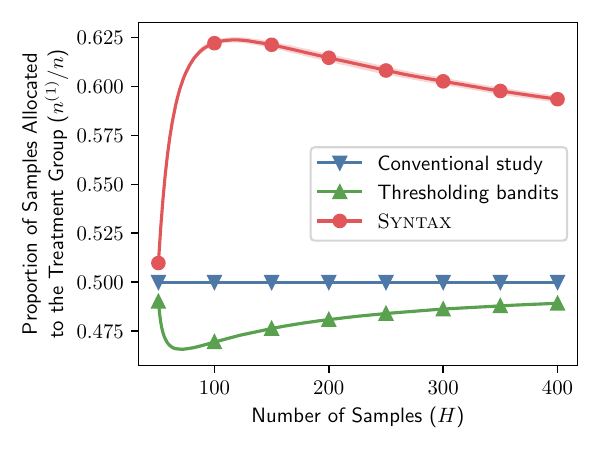}%
    \vspace{-\baselineskip}%
    \caption{\textbf{Proportion of samples allocated to the treatment group over the control group.} By sharing information between the control samples of different subpopulations, \textsc{Syntax} is able allocate more of its samples to the treatment group compared with alternative designs.}%
    \label{fig:samples}%
    \vspace{-\baselineskip-6pt}%
\end{figure}

\paragraph{Sample Allocation Characteristics}
We mentioned how sharing information between the control samples of different subpopulations would allow more samples to be allocated to the treatment group. Figure~\ref{fig:samples} shows the proportion of samples allocated to the treatment group by different algorithms (for \textit{Diminishing Factor Effects}). Indeed, \textsc{Syntax} allocates more of its samples to the treatment group. Proposition~\ref{prop:main} might provide insight into how this is achieved: The uncertainty of the treated response of each subpopulation~$i$ contributes directly to the variance of synthetic estimators through the term $1/n_i^{\mathsmaller{(1)}}$. Whereas, it is possible to mitigate the uncertainty of baseline responses by distributing its contribution among multiple different subpopulations---more precisely, $\min_{\bm{\beta}}\|\bm{\beta}\|_{(N^{\mathsmaller{(0)}})^{-1}}\leq\|\bm{1}_i\|_{(N^{\mathsmaller{(0)}})^{-1}}=1/n_i^{\mathsmaller{(0)}}$. In contrast to \textsc{Syntax}, a conventional study allocates all samples uniformly at random and \textit{Thresholding Bandits} distributes samples adaptively among subpopulations but tends to not differentiate between different treatment groups.

\paragraph{Performance under Model Mismatch}
We also evaluate the performance of \textsc{Syntax} under model mismatch---specifically when the linearity assumption in \eqref{eqn:synthetic} is violated (although it is a natural assumption to make in our setting, see ``On Modeling Assumptions'' in Section~\ref{sec:formulation}). This can only happen in two ways: (i)~Outcomes $y_{it}$ might not be linear with respect to features $\bm{x}_i$. (ii)~Outcomes $y_{it}$ can always be expressed as a linear function of \textit{some} latent factors $\bm{z}_i$, however the minimum number of factors needed to do so might be equal to the number of total time steps (i.e.\ $D_{z}=T$).

\begin{table}
    \centering
    \caption{\textbf{Performance comparison of \textsc{Syntax} and benchmarking algorithms under model mismatch}---when outcomes are non-linear with respect to features.}%
    \label{tab:results-additional-a}%
    \vspace{-7.2pt}%

    \small
    \resizebox{\linewidth}{!}{%
        \begin{tabular}{@{}l*{4}{c@{~}c}@{}}
            \toprule
            & \multicolumn{4}{c}{$H=200$} & \multicolumn{4}{c}{$H=400$} \\
            \cmidrule(lr){2-5} \cmidrule(lr){6-9}
            \textbf{Algorithm} & \multicolumn{2}{c}{FPR} & \multicolumn{2}{c}{TPR} & \multicolumn{2}{c}{FPR} & \multicolumn{2}{c}{TPR} \\
            \midrule
            Conventional study
                & 19.5\% & (0.2\%) & 80.7\% & (0.3\%)
                & 14.9\% & (0.3\%) & 85.4\% & (0.3\%)
                \\
            Thresholding bandits
                & 17.6\% & (0.4\%) & 82.6\% & (0.4\%)
                & 13.7\% & (0.4\%) & 86.4\% & (0.2\%)
                \\
            Synthetic study
                & 18.1\% & (0.4\%) & 81.8\% & (0.3\%)
                & 14.2\% & (0.4\%) & 85.9\% & (0.3\%)
                \\
            Synthetic design
                & 18.3\% & (0.5\%) & 82.0\% & (0.4\%)
                & 14.2\% & (0.4\%) & 85.8\% & (0.3\%)
                \\
            \midrule
            \textbf{\textsc{Syntax}}
                & \bf 16.2\% & \bf (0.4\%) & \bf 84.0\% & \bf (0.4\%)
                & \bf 12.9\% & \bf (0.4\%) & \bf 87.3\% & \bf (0.3\%)
                \\
            \bottomrule
        \end{tabular}}%
    \vspace{-3pt}
\end{table}

\begin{table}
    \centering
    \caption{\textbf{Performance comparison of \textsc{Syntax} and benchmarking algorithms under model mismatch}---when the number of latent factors $D_z=T$.}%
    \label{tab:results-additional-b}%
    \vspace{-7.2pt}%

    \small
    \resizebox{\linewidth}{!}{%
        \begin{tabular}{@{}l*{4}{c@{~}c}@{}}
            \toprule
            & \multicolumn{4}{c}{$H=200$} & \multicolumn{4}{c}{$H=400$} \\
            \cmidrule(lr){2-5} \cmidrule(lr){6-9}
            \textbf{Algorithm} & \multicolumn{2}{c}{FPR} & \multicolumn{2}{c}{TPR} & \multicolumn{2}{c}{FPR} & \multicolumn{2}{c}{TPR} \\
            \midrule
            Conventional study
                & 19.5\% & (0.2\%) & 80.7\% & (0.3\%)
                & 14.9\% & (0.3\%) & 85.4\% & (0.3\%)
                \\
            Thresholding bandits
                & 17.6\% & (0.4\%) & \bf 82.6\% & \bf (0.4\%)
                & 13.7\% & (0.4\%) & \bf 86.4\% & \bf (0.2\%)
                \\
            Synthetic study
                & 15.8\% & (0.3\%) & 71.2\% & (0.7\%)
                & 12.3\% & (0.3\%) & 74.9\% & (0.9\%)
                \\
            Synthetic design
                & 15.7\% & (0.3\%) & 71.5\% & (0.8\%)
                & 12.1\% & (0.3\%) & 75.0\% & (0.9\%)
                \\
            \midrule
            \textbf{\textsc{Syntax}}
                & \bf 14.1\% & \bf (0.3\%) & 73.1\% & (1.0\%)
                & \bf 11.1\% & \bf (0.2\%) & 76.1\% & (1.0\%)
                \\
            \bottomrule
        \end{tabular}}%
    \vspace{-11.4pt}
\end{table}

\begin{figure}[t]
    \centering
    \begin{subfigure}{\linewidth}
        \centering
        \includegraphics[width=.5\linewidth]{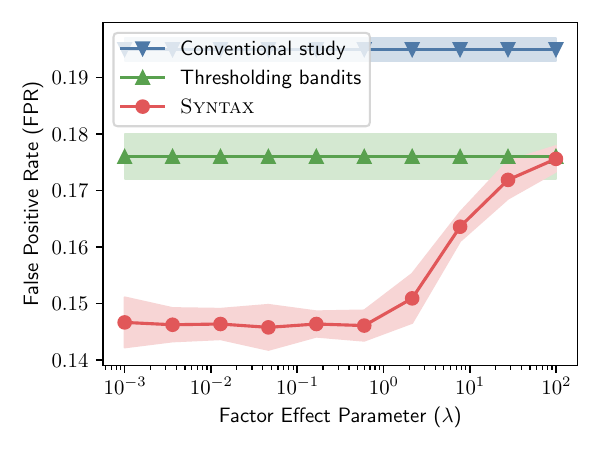}%
        \includegraphics[width=.5\linewidth]{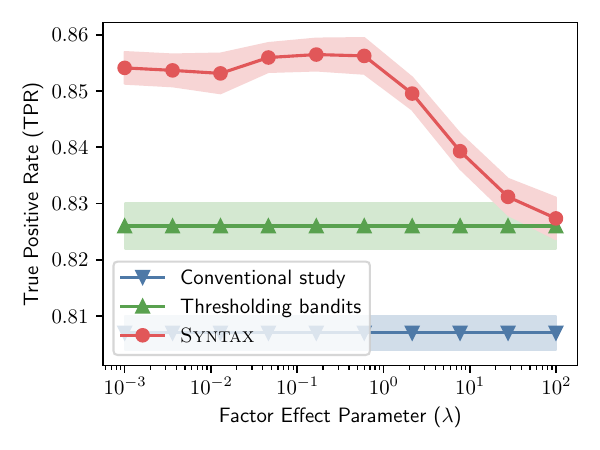}%
        \vspace{-6pt}%
        \caption{\textit{Diminishing Factor Effects}, $H=200$}%
    \end{subfigure}%
    
    \vspace{3pt}%
    \begin{subfigure}{\linewidth}
        \centering
        \includegraphics[width=.5\linewidth]{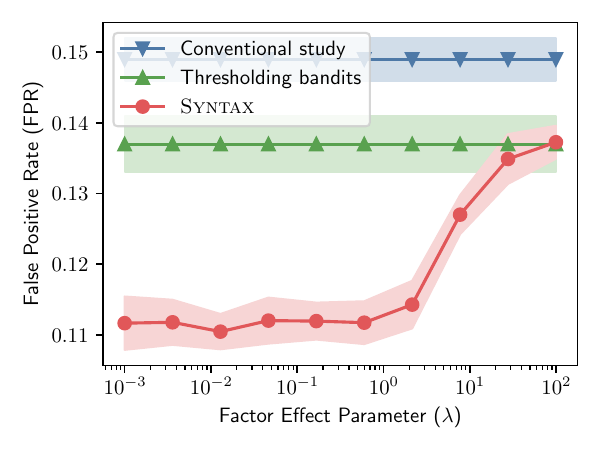}%
        \includegraphics[width=.5\linewidth]{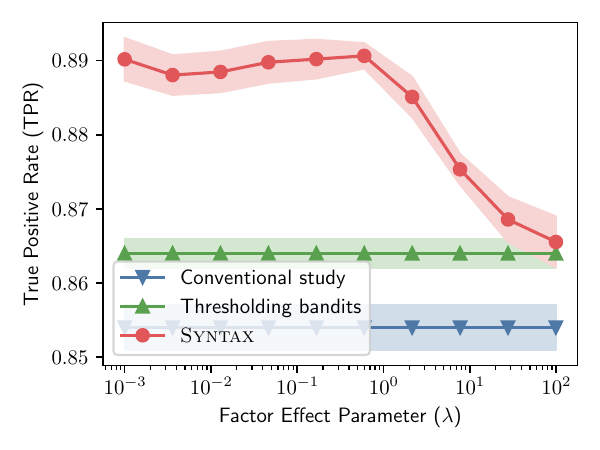}%
        \vspace{-6pt}%
        \caption{\textit{Diminishing Factor Effects}, $H=400$}%
    \end{subfigure}%
    \vspace{-3pt}%
    \caption{\textbf{Sensitivity of \textsc{Syntax} to parameter $\bm{\lambda}$.}}%
    \label{fig:sensitivity}%
    \vspace{-\baselineskip+3pt}
\end{figure}

For the first scenario, we consider the same simulation setup as \textit{Diminishing Factor Effects} but generate mean outcomes according to equation
\begin{align}
    \bar{y}_{it} = \delta_t + \bm{w}_t^{\mathsf{T}}(\bm{x}_i^2) + \bm{\mu}_t^{\mathsf{T}}\bm{z}_i \quad\text{for}\quad t\in[T]
\end{align}
instead, where $\bm{x}_i^2$ denotes the element-wise square of $\bm{x}_i$. The results are given in Table~\ref{tab:results-additional-a}. We see that all methods based on synthetic control, including \textsc{Syntax}, naturally lose performance after this change. However, \textsc{Syntax} still outperforms all benchmarks.

For the second scenario, we again consider the same simulation setup as \textit{Diminishing Factor Effects} but this time set $D_z=T$. The results are given in Table~\ref{tab:results-additional-b}. We see that TPR of \textsc{Syntax} is more sensitive to this change than its FPR. This observation is directly related to a balance between FPR-TPR. In all of our experiments, we fixed the threshold for declaring a positive effect at zero (for instance, see line 11 in Algorithm~\ref{alg:syntax}). Adjusting this threshold would have tuned the balance between FPR nad TPR (higher thresholds would achieve better FPR but worse TPR). Fixing the threshold at zero is a natural choice when all of our benchmarks are based on unbiased estimates. Under model mismatch however, synthetic inference becomes biased, the effect of which is equivalent to changing the classification threshold. Since the FPR performance stays better relative to the TPR performance, we can tell that the bias introduced happens to be negative (equivalent to a higher classification threshold).

\paragraph{Sensitivity to the Factor  Effect Parameter}
Tuning hyper-parameters is a general challenge that affects all online algorithms, including \textsc{Syntax}, since in an online setting, no a-priori data would be available to perform cross-validation. We run additional experiments to evaluate the sensitivity of \textsc{Syntax} to its hyper-parameter $\lambda$ for \textit{Diminishing Factor Effects} by varying $\lambda$ logarithmically from $0.001$ to $100$ (the ideal $\lambda$ according to Proposition~\ref{prop:main} varies between $0.1$ and~$1$). The results are given in Figure~\ref{fig:sensitivity}. We see that \textsc{Syntax} outperforms all of the benchmarks without the hyper-parameter $\lambda$---namely \textit{Conventional Study} and \textit{Thresholding Bandits}---for almost all configurations. Notably, the performance of \textsc{Syntax} degrades only when $\lambda$ is exceedingly large and converges to that of \textit{Thresholding Bandits}. This is because $\lambda$ punishes the errors in synthetic representations ($\|\bm{\beta}-\bm{1}_i\|$). As $\lambda$ gets larger, \textsc{Syntax} reverts back to representing each subpopulation only as itself ($\bm{\beta}=\bm{1}_i$) and hence becomes equivalent to \textit{Thresholding Bandits}.

\vspace{-3pt}
\section{CONCLUSION}
\vspace{-3pt}

We introduced \textsc{Syntax}, a clinical trial design that recruits patients adaptively to identify subpopulations with positive treatment effect. As we already argued for in the introduction, running trials that take multiple populations into consideration is absolutely essential. By reducing the length of such trials through designs like \textsc{Syntax} that seek to make more efficient use of sample would not only make multi-population trials more feasible but also help treatment get to the market sooner, benefiting the patients. Although we presented the problem mainly from a clinical trial perspective, \textsc{Syntax} can generally be applied to any thresholding bandit problem where time-series context (such as baseline responses) are available.

\paragraph{On Clinical Equipoise}
From the perspective of clinical equipoise, the use of adaptive trials, including \textsc{Syntax}, is understood to be situational \citep{palmer1999ethics,fillion2019clinical}. We emphasize that running a \textsc{Syntax}-based clinical trial should only be considered if there is a genuine lack of information to determine the target population for a promising new treatment, especially when responses of different subpopulations to the treatment are expected to be highly heterogenous.
More specifically, the principle of clinical equipoise requires two conditions to be met: (i) there should be uncertainty among clinicians regarding the effectiveness of a treatment, and (ii) the results of a clinical trial should be convincing enough to resolve this uncertainty \citep{freedman1987equipose,miller2007clinical}. We examine \textsc{Syntax} on these tow axises:
\begin{enumerate}[label=(\roman*),nosep,parsep=\parskip,leftmargin=18pt]
    \item \justify{\textsc{Syntax} is an exploratory trial design rather than a confirmatory trial design. This means that, even at the end of a \textsc{Syntax}-based trial, there would still be genuine uncertainty regarding the effectiveness of the treatment for all subpopulations (hence the need for a subsequent confirmatory trial). At no point during the trial, \textsc{Syntax} assigns a patient to a treatment that can readily be confirmed to be ineffective for them; and the first condition is maintained through the trial. Moreover, the ultimate goal of \textsc{Syntax} is to identify subpopulations as the potential targets of a subsequent Phase III trial. As such, it would be in contradiction with the first condition to use \textsc{Syntax} when there is already a clear candidate to target in such a trial. As we already mentioned above, the use of \textsc{Syntax} should be reserved to the cases where there is disagreement in terms of which subpopulations would be the most suitable targets of a Phase III trial.}
    
    \item \justify{Being an exploratory trial design, \textsc{Syntax} cannot resolve all uncertainties regarding the effectiveness of a treatment. But, it can determine the appropriate target population for a subsequent confirmatory trial, and thereby help satisfy the second condition. As pointed out by \citet{chiu2018design}, a confirmatory trial is more likely to be inconclusive regarding subpopulation effects when the potential heterogeneity in patient responses is not taken into account.}
\end{enumerate}

\justify{\paragraph{On Real-data Validation}
A real-data validation of \textsc{Syntax} would essentially require running a new clinical trial. This would be infeasible (and potentially unethical) for a method development paper as ours. Hence, the standard approach to evaluating adaptive clinical trial designs is only to use simulated data \citep[e.g.][]{onur2019sequential,curth2023adaptively,huyuk2023when}, even in the biostatistics literature \citep[e.g.][]{friede2012conditional,magnusson2013group,stallard2014adaptive,henning2015closed,rosenblum2016group}.}

\bibliographystyle{myIEEEtranSN}
\bibliography{references}

\section*{Checklist}
\begin{enumerate}

 \item For all models and algorithms presented, check if you include:
 \begin{enumerate}
   \item A clear description of the mathematical setting, assumptions, algorithm, and/or model. [Yes] See Section~\ref{sec:formulation} for the mathematical setting and assumptions, see \eqref{eqn:synthetic} and \eqref{eqn:_proofa} in Section~\ref{sec:formulation} for our model, and see Algorithm~\ref{alg:syntax} in Section~\ref{sec:syntax} for our algorithm.
   \item An analysis of the properties and complexity (time, space, sample size) of any algorithm. [Not Applicable]
   \item (Optional) Anonymized source code, with specification of all dependencies, including external libraries. [Yes]
 \end{enumerate}

 \item For any theoretical claim, check if you include:
 \begin{enumerate}
   \item Statements of the full set of assumptions of all theoretical results. [Yes] See Section~\ref{sec:formulation} for the full set of assumptions.
   \item Complete proofs of all theoretical results. Yes, proofs are given in Appendix~\ref{sec:proofs-appendix}.
   \item Clear explanations of any assumptions. [Yes] See ``On Modeling Assumptions'' in Section~\ref{sec:formulation}.
 \end{enumerate}

 \item For all figures and tables that present empirical results, check if you include:
 \begin{enumerate}
   \item The code, data, and instructions needed to reproduce the main experimental results (either in the supplemental material or as a URL). [Yes] See ``Environments'' and ``Benchmarks'' in Section~\ref{sec:experiments} and the benchmark algorithms in Appendix~\ref{sec:benchmark-appendix}.
   \item All the training details (e.g., data splits, hyperparameters, how they were chosen). [Yes] See Appendix~\ref{sec:benchmark-appendix}.
   \item A clear definition of the specific measure or statistics and error bars (e.g., with respect to the random seed after running experiments multiple times). [Yes] See ``Environments'' in Section~\ref{sec:experiments}.
   \item A description of the computing infrastructure used. (e.g., type of GPUs, internal cluster, or cloud provider). [Yes] See Appendix~\ref{sec:benchmark-appendix}.
 \end{enumerate}

 \item If you are using existing assets (e.g., code, data, models) or curating/releasing new assets, check if you include:
 \begin{enumerate}
   \item Citations of the creator If your work uses existing assets. [Not Applicable]
   \item The license information of the assets, if applicable. [Not Applicable]
   \item New assets either in the supplemental material or as a URL, if applicable. [Not Applicable]
   \item Information about consent from data providers/curators. [Not Applicable]
   \item Discussion of sensible content if applicable, e.g., personally identifiable information or offensive content. [Not Applicable]
 \end{enumerate}

 \item If you used crowdsourcing or conducted research with human subjects, check if you include:
 \begin{enumerate}
   \item The full text of instructions given to participants and screenshots. [Not Applicable]
   \item Descriptions of potential participant risks, with links to Institutional Review Board (IRB) approvals if applicable. [Not Applicable]
   \item The estimated hourly wage paid to participants and the total amount spent on participant compensation. [Not Applicable]
 \end{enumerate}

 \end{enumerate}

\newpage
\appendix
\onecolumn

\section{PROOFS OF PROPOSITIONS}
\label{sec:proofs-appendix}

\subsection{Proof of Proposition~\ref{prop:main}}

Denote with $e_{iT}^{\mathsmaller{(0)}}=\hat{y}_{iT}^{\mathsmaller{(0)}}-\bar{y}_{iT}$, $e_{iT}^{\mathsmaller{(1)}}=\hat{y}_{iT}^{\mathsmaller{(1)}}-\bar{y}_{iT}-r_i$, and $e_{it}=\hat{y}_{it}-\bar{y}_{it}$ the observation noises.
Similar to \citet{abadie2010synthetic}, we start by showing that unobservable factor loadings $\bm{z}_i$ can be inferred through observable responses $\hat{y}$ if responses are observed for a long enough pre-treatment period. More specifically, a good match in terms of responses $\smash{\bm{\hat{y}}_{i\neg T}=\hat{Y}_{\neg T}\bm{\beta}}$ leads to a good match in terms of factor loadings $\bm{z}_i\approx Z\bm{\beta}$. We have
\begin{align}
    0 &= \bm{\hat{y}}_{i\neg T}-\hat{Y}_{\neg T}\bm{\beta} \\
    &= \bm{e}_{i\neg T}-E_{\neg T}\bm{\beta}+\bm{\bar{y}}_{i\neg T}-\bar{Y}_{\neg T}\bm{\beta} \label{eqn:proofa} \\
    &= \bm{e}_{i\neg T}-E_{\neg T}\bm{\beta}
        + \bm{\delta}_{\neg T} + W_{\neg T}^{\mathsf{T}}\bm{x}_i + M_{\neg T}^{\mathsf{T}}\bm{z}_i 
        - (\bm{\delta}_{\neg T}\bm{1}^{\mathsf{T}}+W_{\neg T}^{\mathsf{T}}X+M_{\neg T}^{\mathsf{T}}Z)\bm{\beta} \label{eqn:proofb} \\
    &= \bm{e}_{i\neg T}-E_{\neg T}\bm{\beta} + M_{\neg T}^{\mathsf{T}}(\bm{z}_i-Z\bm{\beta}) \label{eqn:proofc}
\end{align}
where \eqref{eqn:proofa} is due to \eqref{eqn:_proofa}, \eqref{eqn:proofb} is due to \eqref{eqn:synthetic}, and \eqref{eqn:proofc} holds if $\bm{x}_i=X\bm{\beta}$ and $\bm{1}^{\mathsf{T}}\bm{\beta}=1$. Hence,
\begin{align}
    \bm{z}_i-Z\bm{\beta} &= (M^{+}_{\neg T})^{\mathsf{T}}E_{\neg T}(\bm{\beta}-\bm{1}_i) \label{eqn:proofd}
\end{align}
\justify{where $M_{\neg T}^{+}=M_{\neg T}^{\mathsf{T}}(M_{\neg T}M_{\neg T}^{\mathsf{T}})^{-1}$ is the right inverse of $M_{\neg T}$. Notably, this right inverse exists only if $M_{\neg T}$ has full rank and $T>D_z$ (i.e.\ when the pre-treatment period is long enough). It is already possible to spot that the term in \eqref{eqn:proofd} is the source of representation error in Proposition~\ref{prop:main}. This provides the insight that representation error is essentially caused by the imperfect estimation of factor loadings $\bm{z}_i$ through noisy observations of responses $\hat{y}$.}

With this result, we are now ready to characterize the estimation error $r_i-\hat{r}_i(\beta)$ in terms of observation noises $e_{it},e^{\mathsmaller{(0)}}_{iT},e^{\mathsmaller{(1)}}_{iT}$. We have
\begin{align}
    &r_i - \hat{r}_i(\bm{\beta}) \nonumber \\
    &\quad= r_i-\hat{y}_{iT}^{\mathsmaller{(1)}}+\bm{\beta}^{\mathsf{T}}\bm{\hat{y}}_{\cdot T}^{\mathsmaller{\mathsmaller{(0)}}} \\
    &\quad= r_i -(r_i+\bar{y}_{iT}+e_{iT}^{\mathsmaller{(1)}})+\bm{\beta}^{\mathsf{T}}(\bm{\bar{y}}_{\cdot T}+\bm{e}_{\cdot T}^{\mathsmaller{(0)}}) \label{eqn:proofe} \\
    &\quad = -e_{iT}^{\mathsmaller{(1)}}+\bm{\beta}^\mathsf{T}\bm{e}_{\cdot T}^{\mathsmaller{(0)}}
        - \delta_T-\bm{w}_T^{\mathsf{T}}\bm{x}_i-\bm{\mu}_T^{\mathsf{T}}\bm{z}_i
        + \bm{\beta}^{\mathsf{T}}(\delta_T\bm{1}+X^{\mathsf{T}}\bm{w}_T+Z^{\mathsf{T}}\bm{\mu}_T) \label{eqn:prooff} \\
    &= -e_{iT}^{\mathsmaller{(1)}}+\bm{\beta}^\mathsf{T}\bm{e}_{\cdot T}^{\mathsmaller{(0)}} + \bm{\mu}_T^{\mathsf{T}}(\bm{z}_i-Z\bm{\beta}) \label{eqn:proofg} \\
    & = -e_{iT}^{\mathsmaller{(1)}}+\bm{\beta}^\mathsf{T}\bm{e}_{\cdot T}^{\mathsmaller{(0)}} + \bm{\mu}_T^{\mathsf{T}}(M^{+}_{\neg T})^{\mathsf{T}}E_{\neg T}(\bm{\beta}-\bm{1}_i)
\end{align}
where \eqref{eqn:proofe} is due to \eqref{eqn:_proofe}, \eqref{eqn:prooff} is due to \eqref{eqn:synthetic}, and \eqref{eqn:proofg} holds if $\bm{x}_i=X\bm{\beta}$ and $\bm{1}^{\mathsf{T}}\bm{\beta}=1$.
Since $e_{iT}^{\mathsmaller{(0)}}$, $e_{iT}^{\mathsmaller{(1)}}$, and $\{e_{it}\}_{t<T}$ all have zero mean and are independent from each other, $\mathbb{E}[r_i-\hat{r}_i(\bm{\beta})]=0$ and
\begin{align}
    \mathbb{V}[r_i-\hat{r}_i(\bm{\beta})] &= \mathbb{E}[(r_i-\hat{r}_i(\bm{\beta}))^2] \\
    &= \mathbb{E}[(e_{iT}^{\mathsmaller{(1)}})^2] + \mathbb{E}[\|\bm{\beta}\|_{\bm{e}_{\cdot T}^{\mathsmaller{(0)}}(\bm{e}_{\cdot T}^{\mathsmaller{(0)}})^{\mathsf{T}}}^2]
        + \mathbb{E}[\|\bm{\mu}_T^{\mathsf{T}}(M^{+}_{\neg T})^{\mathsf{T}}E_{\neg T}(\bm{\beta}-\bm{1}_i)\|^2] \\
    &\leq \mathbb{E}[(e_{iT}^{\mathsmaller{(1)}})^2] + \mathbb{E}[\|\bm{\beta}\|_{\bm{e}_{\cdot T}^{\mathsmaller{(0)}}(\bm{e}_{\cdot T}^{\mathsmaller{(0)}})^{\mathsf{T}}}^2]
        + \|M^{+}_{\neg T}\bm{\mu}_T\|^2\mathbb{E}[\|\bm{\beta}-\bm{1}_i\|^2_{E_{\neg T}^{\mathsf{T}}E_{\neg T}}] \\
    &= \sigma^2/n_i^{\mathsmaller{(1)}} +\sigma^2\|\bm{\beta}\|_{(N^{\mathsmaller{(0)}})^{-1}}^2 + \lambda\sigma^2\|\bm{\beta}-\bm{1}_i\|_{N^{-1}}^2
\end{align}

\subsection{Proof of Proposition~\ref{prop:corollary}}

Proposition~\ref{prop:corollary} is a corollary of Proposition~\ref{prop:main}. We simply have
\begin{align}
    \mathbb{V}[r_i-\hat{r}_i(\bm{\beta}^*_i)] &\leq V_i(\bm{\beta}^*_i) \label{eqn:__proofa} \\
    &\leq V_i(\bm{1}_i) \label{eqn:__proofb} \\ 
    &= \mathbb{V}[r_i-\hat{r}_i^{~\text{naive}}] 
\end{align}
where \eqref{eqn:__proofa} is due to Proposition~\ref{prop:main} and \eqref{eqn:__proofb} is by definition of $\bm{\beta}_i^*$ in \eqref{eqn:_proofj}.

\section{BENCHMARK ALGORITHMS}
\label{sec:benchmark-appendix}

Algorithm~\ref{alg:benchmark} summarizes all benchmarks (except \textsc{Syntax}, which is given in Algorithm~\ref{alg:syntax} instead). We set the factor effect parameter~$\lambda$ to its optimal value given in Proposition~\ref{prop:main} (except for the sensitivity experiments in Figure~\ref{fig:sensitivity}). This means the results we present are for perfectly tuned versions of each algorithm. All experiments are run on a personal computer with an Intel i9 processor. Finally, the code to reproduce our experimental results can be found at \href{https://github.com/alihanhyk/syntax}{\texttt{https://github.com/alihanhyk/syntax}} and \href{https://github.com/vanderschaarlab/syntax}{\texttt{https://github.com/vanderschaarlab/syntax}}.

\begin{algorithm}[H]
    \caption{Benchmarks}
    \label{alg:benchmark}
    \vspace{3pt}
    \textbf{Parameters:} Episode horizon~$H$, factor effect parameter~$\lambda$ \\
    \textbf{Output:} Subpopulations~$\hat{\mathcal{I}}^*$ with positive treatment effect
    \vspace{3pt}
    \hrule
    \begin{algorithmic}[1]
        \STATE $n_i^{\mathsmaller{(\alpha)}}\gets 0$,~~$\hat{y}_{iT}^{\mathsmaller{(\alpha)}}\gets 0$,~~$\bm{\hat{y}}_{i\neg T}\gets\bm{0}$,~~$\forall i\in[K],\alpha\in\{0,1\}$
        \FOR{$\eta\in\{1,2,\ldots,H\}$}
            \IF{\textit{Conventional Study} \textbf{or} \textit{Synthetic Study}}
                \STATE Sample $i,\alpha$ uniformly at random from $[K]\times\{0,1\}$
            \ELSIF{\textit{Thresholding Bandits}}
                \STATE $i\gets \argmin_{i\in[K]}$ $S_i^{~\text{naive}}=|\hat{r}_i^{~\text{naive}}|/(1/n_i^{\mathsmaller{(0)}}+1/n_i^{\mathsmaller{(1)}})^{1/2}$
                \STATE $\alpha\gets \argmin_{\alpha\in\{0,1\}} n_i^{\mathsmaller{(\alpha)}}$
            \ELSIF{\textit{Synthetic Design}}
                \STATE $i,\alpha \gets \argmin_{i\in[K],\alpha\in\{0,1\}} \max_{i^*\in[K]} \min_{\bm{\beta}:\: \bm{x}_{i^*}=X\bm{\beta},\, \bm{\hat{y}}_{i^*\neg T}=\hat{Y}_{\neg T}\bm{\beta},\,\bm{1}^{\mathsf{T}}\bm{\beta}=1}$ \\
                \hfill $V_{i^*}(\bm{\beta}; N^{\mathsmaller{(0)}}+(1-\alpha)\bm{1}_i{\bm{1}_i}^{\mathsf{T}}\!, N^{\mathsmaller{(1)}}+\alpha\bm{1}_i{\bm{1}_i}^{\mathsf{T}})$
            \ENDIF
            \STATE Recruit from population~$i$ and treatment group~$\alpha$
            \STATE Observe pre-treatment outcomes $\bm{y}_{\neg T}=[y_1\cdots y_{T-1}]^{\mathsf{T}}$ and the final outcome $y_T$
            \STATE $n_i^{\mathsmaller{(\alpha)}}\gets n_i^{\mathsmaller{(\alpha)}}+1$
            \STATE $\hat{y}_{iT}^{\mathsmaller{(\alpha)}}\gets \hat{y}_{iT}^{\mathsmaller{(\alpha)}} + (y_T-\hat{y}_{iT}^{\mathsmaller{(\alpha)}})/n_i^{\mathsmaller{(\alpha)}}$
            \STATE $\bm{\hat{y}}_{i\neg T}\gets \bm{\hat{y}}_{i\neg T} + (\bm{y}_{\neg T}-\bm{\hat{y}}_{i\neg T})/n_i$
        \ENDFOR
        \IF{\textit{Conventional Study} \textbf{or} \textit{Thresholding Bandits}}
            \STATE $\hat{\mathcal{I}}^*\gets \{i\in[K]: \hat{r}_i^{~\text{naive}} > 0\}$
        \ELSIF{\textit{Synthetic Study} \textbf{or} \textit{Synthetic Design}}
            \STATE $\hat{\mathcal{I}}^*\gets \{i\in[K]: \hat{r}_i(\bm{\beta}^*_i) > 0\}$
        \ENDIF
    \end{algorithmic}
\end{algorithm}

\end{document}